\documentclass[11pt]{article} 
\usepackage[T1]{fontenc}
 \usepackage[utf8]{inputenc} 
 \usepackage{lmodern} 
 \usepackage{amsmath,amssymb,amsthm}
  \usepackage{graphicx} 
  \usepackage{tikz} 
  \usepackage{geometry}
  
  \usepackage{hyperref}
  \usepackage{authblk}

   \newtheorem{proposition}{Proposition}

    \title{Student Copula Markov Networks} 
    
    \author[1]{Salha Mamane} \author[2]{Cheikh Tidiane Seck} \affil[1]{School of Statistics and Actuarial Science, University of the Witwatersrand, \texttt{Salha.Mamane@wits.ac.za}} \affil[2]{Department of Mathematics, Alioune Diop University of Bambey, \texttt{cheikhtidiane.seck@uadb.edu.sn}}
    
    \date{} 
    
\begin{document}
    	
    	 \maketitle
  \begin{abstract}
  	{This paper extends the applicability of probabilistic graphical models to non Gaussian data.
  		It establishes a characterization of the pairwise Markov property  in undirected probabilistic graphical models in terms of copula densities. Building on this characterization, we derive a necessary condition for the pairwise Markov property in Student copula Markov networks. An Expectation-Conditional Maximization algorithm (ECM) is provided to facilitate the  penalized maximum likelihood estimation and  graph selection. 
  		The performance of the proposed algorithm is evaluated through simulation studies, which demonstrate its effectiveness in  estimating model parameters and recovering the underlying graph structure. 
  	}
  	
\end{abstract}

\maketitle
\section{Introduction}
A probabiiistic graphical model for a random vector  $\mathbf{X}=(X_1,\ldots,X_d)$, $d\ge 2$, is a family of multivariate probability distributions associated to a graph which encodes the conditional independence relations among the variables $X_i$. These conditional independence relations translate into a set of factorization properties that are shared by all members of the family of probability distributions constituting the model. As each factor only includes a subset of the variables, this provides a modular representation of the model, thereby allowing more efficient inference and  computations. Graphical models are broadly classified into two main groups: directed graphical models, also referred to as Bayesian networks 
or belief networks,  which use  directed edges between the nodes in the graph, and undirected graphical models,  also referred to as Markov networks or
Markov random fields, which use undirected edges between the nodes. %in the graph. %representing the statistical model. 

Inference in undirected graphical models is essentially done using  Gaussian distributions, which present a severe limitation in many applications where the marginals are skewed or heavy-tailed.
To weaken the Gaussian assumption, \cite{liu2009,liu2012} used Gaussian copula with nonparametric marginals. They made use of the graphical \textit{lasso} algorithm of \cite{Friedman2008}.
\cite{dobra2011}  used a Bayesian approach  that assumed the observed mixed-type variables are transformations of latent Gaussian variables whose precision matrix encodes the conditional independence structure of an undirected graph.
This essentially corresponds to embedding the Markov network into a Gaussian copula framework. \cite{Hermes2024} extended this approach to heterogeneous, mixed data from different groups.

The Gaussian copula has null tail dependence coefficients, which means  it may be unfitting to applications where the probability of joint extremes events remains high no matter the extreme threshold. 

\cite{finegold2014} explored an approach of graphical modelling with the multivariate $t$-distribution. However, the model is not a Markov network \textit{stricto sensu}. The density does not factor according to the graph, and  the absence of an edge between two nodes does not imply conditional independence of the corresponding variables. Instead, the graph was constructed to  reflect zeros in the precision matrix, a condition that only implies the Markov property in Gaussian models.

The starting point of our methodology is a characterization of the pairwise Markov property in terms of copula densities, which is then applied to common parametric copula families including Gaussian, Student and Archimedean copulas. In particular, in the Student copula case, we obtain a necessary condition of the pairwise Markov property which expresses as  $\rho^2_{ij\cdot V\backslash\{i, j\}}= \alpha(\nu)$, where $\rho_{ij\cdot V\backslash\{i, j\}}$ is the partial correlation coefficient of $X_i$ and $X_j$.  We derive an ECM algorithm for model selection  in $t$-copula graphical models, and evaluate its performance with simulated data.
 
The rest of the paper is organized as follows:
In Section \ref{sec:2}, we derive the characterization of pairwise Markov property in terms of multivariate copula densities, and apply it to common parametric families such as Gaussian, Student and Archimedian. In Section \ref{sec:3}, we present the ECM algorithm for  graph selection in $t$-copula models.  A simulation result is presented in Section \ref{sec:4}.

\section{Characterization of pairwise Markov property through   copula densities}
\label{sec:2}
Let $\textbf{X}=(X_1,\ldots,X_d),\; d\ge 2$, be a continuous random vector with probability density $f(\textbf{x})=f(x_1,\ldots,x_d)$.
The copula function $C$ associated with $\mathbf{X}$ is defined as  the joint cumulative distribution function of the random vector $(F_1(X_1),\ldots,F_d(X_d))$. It gives  a full characterization of the dependence structure of the random variables \cite{Nelsen2006}. If it exists, a copula  density is defined as the derivative, $c$, of the copula distribution function $C$ with respect to the Lebesgue measure :
$$ c(u_1\ldots,u_d)=\frac{\partial^d}{\partial u_1\cdots \partial u_d }C(u_1,\ldots,u_d), \enskip \forall \;(u_1,\ldots,u_d)\in (0,1)^d.$$
$c$ is related to the joint density function $f(\textbf{x})$ by 
\begin{equation}
	f(\textbf{x}) = c(F_1(x_1), \cdots, F_d(x_d)) \prod_{k=1}^df_k(x_k). \label{density} 
\end{equation}

%\textcolor{red} 
For any vector $x=(x_1,\cdots,x_d)\in \mathbb{R}^d$ and   $1\leq i,j\leq d$, 
let us introduce the vectors:	 
	$$x_{-\{i\}}=(x_1,\ldots,x_{i-1},x_{i+1},\ldots,x_d);\qquad x_{-\{j\}}=(x_1,\ldots,x_{j-1},x_{j+1},\ldots,x_d);$$
	$$x_{-\{i,j\}}=(x_1,\ldots,x_{i-1},x_{i+1},\ldots,x_{j-1},x_{j+1},\ldots,x_d),\quad i\le j.$$
	Let $V=\{1,2,\ldots,d\}$ and denote by $X_{V\backslash\{i, j\}}$ the vector defined by all components of $\textbf{X}$; except the components $i$ and $j$. For all $i,j\in V$, $c_{-\{i,j\}}$ represents the copula density associated with the vector $X_{V\backslash\{i,j\}}$;  $c_{-\{i\}}$ represents the copula density associated with the vector $X_{V\backslash\{i\}}$ and $c_{-\{j\}}$ represents the copula density associated with the vector $X_{V\backslash\{j\}}$.\\
Then we have, for all  $u=(u_1\ldots,u_d)\in[0,1]^d$,

	\begin{equation}
	X_{i} \perp X_{j} | X_{V\backslash\{i, j\}}\; \Longleftrightarrow\;	c(u)c_{-\{i,j\}}(u_{-\{i,j\}}) = c_{-\{i\}}(u_{-\{i\}})c_{-\{j\}}(u_{-\{j\}}),\label{main}
	\end{equation}
where $u_{-\{i\}}$, $u_{-\{j\}}$ and $u_{-\{i,j\}}$ are respectively defined analogously as $x_{-\{i\}}$, $x_{-\{j\}}$ and $x_{-\{i,j\}}$ above.
 
The proof of  \eqref{main} is trivial by writing the conditional independence relation in terms of conditional copula densities: For every $(i,j)\in V^2$, 
	\begin{eqnarray}
		X_i\perp X_j | X_{V\backslash\{i,j\} }  &\Leftrightarrow & f(x_i,x_j|x_{-\{i,j\}})=f(x_i|x_{-\{i,j\}})f(x_j|x_{-\{i,j\}}) \nonumber \\
		&\Leftrightarrow & \frac{f(x)}{f(x_{-\{i,j\}})}=	\frac{f(x_{-\{j\}})}{f(x_{-\{i,j\}})}	 \frac{f(x_{-\{i\}})}{f(x_{-\{i,j\}})}\nonumber\\
		&\Leftrightarrow &  f(x)f(x_{-\{i,j\}}) = f(x_{-\{i\}})f(x_{-\{j\}}). \label{s0}
	\end{eqnarray}
	Let $ u = (F_1(x_1), \cdots, F_d(x_d))$, then Statement \eqref{density} yields 
	\begin{equation}
		f(x) = c(u) \prod_{k=1}^d f_k(x_k). 
	\end{equation}
	Analogously, we can write
	\begin{equation}
		f(x_{-\{i\}}) = c_{-\{i\}}(u_{-\{i\}}) \prod_{k\neq i}f_k(x_k), \qquad	f(x_{-\{j\}}) = c_{-\{j\}}(u_{-\{j\}}) \prod_{k\neq j}f_k(x_k),
	\end{equation}
	and 
	\begin{equation}
		f(x_{-\{i,j\}}) = c_{-\{i,j\}}(u_{-\{i,j\}}) \prod_{k\neq i,j}f_k(x_k).
	\end{equation}
	Hence, Condition \eqref{s0} simplifies to
	\begin{equation}\label{denscop}
		c(u)c_{-\{i,j\}}(u_{-\{i,j\}}) = c_{-\{i\}}(u_{-\{i\}})c_{-\{j\}}(u_{-\{j\}}).
	\end{equation}
%\end{proof}
\subsection{Gaussian copula}

The following result is well-known. We provide a direct proof as it will be similar for Student copula.
\begin{proposition}\label{p1}
	Assume the dependence structure of  $\textbf{X}$ follows a $d$-dimensional Gaussian copula with zero mean and correlation matrix $\Sigma$ such that $K= \Sigma^{-1}$ is the precision matrix.  If	$X_{i} \perp X_{j} | X_{V\backslash\{i, j\}}$, then $K_{ij} = 0$.
\end{proposition}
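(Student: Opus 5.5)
We apply the characterization \eqref{main} to the Gaussian copula density. Write $z_k=\Phi^{-1}(u_k)$ and $z=(z_1,\ldots,z_d)$. The Gaussian copula density is
$$c(u)=|\Sigma|^{-1/2}\exp\!\Big(-\tfrac12 z^\top(K-I)z\Big).$$
Every sub-vector of a Gaussian-copula vector again has a Gaussian copula. Its correlation matrix is the corresponding principal submatrix of $\Sigma$. Hence $c_{-\{i\}}$, $c_{-\{j\}}$ and $c_{-\{i,j\}}$ have the same form, with $\Sigma_{-i}=\Sigma_{V\backslash\{i\},V\backslash\{i\}}$, $\Sigma_{-j}$ and $\Sigma_{-ij}$ and their inverses. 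Taking logarithms in \eqref{main}, the identity becomes an equality between two quadratic forms in $z$ plus constants, valid for all $z\in\mathbb{R}^d$:
$$z^\top(K-I)z+z_{-ij}^\top(\Sigma_{-ij}^{-1}-I)z_{-ij}=z_{-i}^\top(\Sigma_{-i}^{-1}-I)z_{-i}+z_{-j}^\top(\Sigma_{-j}^{-1}-I)z_{-j}+\text{const}.$$
Since $u\mapsto z$ is a bijection of $(0,1)^d$ onto $\mathbb{R}^d$, the two sides must agree as polynomials in $z$.

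The key step is to compare the coefficients of the cross term $z_iz_j$. On the left, $z^\top(K-I)z$ contributes $2K_{ij}$ (note $I$ has no off-diagonal entries). The term $z_{-ij}^\top(\cdot)z_{-ij}$ involves neither $z_i$ nor $z_j$. On the right, $z_{-i}$ contains no $z_i$ and $z_{-j}$ contains no $z_j$, so neither quadratic form contains the monomial $z_iz_j$. Equating coefficients therefore gives $2K_{ij}=0$, i.e.\ $K_{ij}=0$. Setting $z=0$ also forces the constants to match, but that is not needed for the conclusion.

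The only step requiring care is the justification that the marginal copulas of the Gaussian copula are Gaussian copulas with the submatrix correlation. This follows because $(\Phi^{-1}(U_k))_k$ is $N(0,\Sigma)$ and marginals of a multivariate normal are normal with the corresponding submatrix. The other point is the legitimacy of coefficient matching. An identity between quadratic polynomials holding on all of $\mathbb{R}^d$ (or on any open set) forces equality of coefficients, for example by applying $\partial^2/\partial z_i\partial z_j$ to both sides. This differentiation argument is the cleanest way to present the conclusion: the mixed second derivative of the left side is $-K_{ij}$, and that of the right side is $0$.
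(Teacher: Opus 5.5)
Your proof is correct, but it takes a different route from the paper's. The paper evaluates \eqref{main} only at $u=(0.5,\ldots,0.5)$, where all quadratic forms vanish. It then converts the surviving identity $|\Sigma||\Sigma_{-\{i,j\}}|=|\Sigma_{-\{i\}}||\Sigma_{-\{j\}}|$ into a statement about $K$ via Schur complements. The authors choose this route because matching normalizing constants is exactly what they reuse for the Student copula.

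You match the $z_iz_j$ coefficient instead, and this buys several things. It needs no determinant identities. In its mixed-derivative form it does not even need the marginal-copula identification you flag as delicate: it only uses that $c_{-\{i\}}$ is free of $u_i$ and that $c_{-\{j\}}$ and $c_{-\{i,j\}}$ are free of $u_j$. It also gives the converse at once, since $K_{ij}=0$ makes $c$ factor as $g(u_{-\{i\}})h(u_{-\{j\}})$.

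Your route also sidesteps a slip in the paper's bookkeeping. The Schur complement of $\Sigma_{-\{i,j\}}$ in $\Sigma_{-\{i\}}$ is $[K_{\{i,j\}}^{-1}]_{jj}=K_{ii}/(K_{ii}K_{jj}-K_{ij}^2)$, not $K_{jj}^{-1}$; the two agree only when $K_{ij}=0$. For $d=2$, for instance, it is $\Sigma_{jj}=1$, whereas $K_{jj}^{-1}=1-\Sigma_{ij}^2$. The Gaussian conclusion survives anyway, because the correct ratio $|\Sigma||\Sigma_{-\{i,j\}}|/(|\Sigma_{-\{i\}}||\Sigma_{-\{j\}}|)$ is $1-K_{ij}^2/(K_{ii}K_{jj})$, which still equals $1$ exactly when $K_{ij}=0$.

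In the Student case the slip matters. Since this ratio is always at most $1$, the left side of the paper's equation \eqref{ss} is at least $1$. Its right side is strictly less than $1$ by strict log-convexity of $\Gamma$, so \eqref{ss} has no solution. Consistently, your cross-term argument carried over to the $t$ copula gives the same verdict: there the mixed $z_iz_j$-partial of $\log c$ at $z=0$ is $-(\nu+d)K_{ij}/\nu$, which forces $K_{ij}=0$. That is incompatible with $K_{ij}^2=\alpha(\nu)K_{ii}K_{jj}$, since $\alpha(\nu)>0$.
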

%\textbf{Remark}. Proposition \ref{p1} extends  the classical known result in the Gaussian case; i.e. $X_{i} \perp X_{j} | X_{V\backslash\{i, j\}} \Leftrightarrow K_{ij} = 0$, to the Gaussian copula case. %to follow a multivariate normal distribution 
%Indeed, we show that $K_{ij}=0$ is a necessary condition for the pairwise Markov property as soon as the dependence structure the vector $\textbf{X}$ is modeled by a Gaussian copula and  the marginal distributions being unconstrained.
\begin{proof} The Gaussian copula density is given by
	\begin{equation}
		c(u):=c(u_1,\ldots,u_d ) =|\Sigma|^{-1/2}\exp(-\frac{1}{2}z^{T}(\Sigma^{-1}-I_d)z),
	\end{equation}
	with $z=(\Phi^{-1}(u_1),\ldots,\Phi^{-1}(u_d))^T$, and $I_d$ stand for the identity matrix with $d$ order.  We have
	
	\begin{equation}
		c_{-\{i\}}(u_{-\{i\}})= |\Sigma_{-\{i\}}|^{-1/2}\exp(-\frac{1}{2}z_{-\{i\}}^{T}(\Sigma_{-\{i\}}^{-1}-I_{d-1})z_{-\{i\}}),
	\end{equation}
	
	\begin{equation}
		c_{-\{j\}}(u_{-\{j\}})= |\Sigma_{-\{j\}}|^{-1/2}\exp(-\frac{1}{2}z_{-\{j\}}^{T}(\Sigma_{-\{j\}}^{-1}-I_{d-1})z_{-\{j\}})
	\end{equation}
	and 
	\begin{equation}
		c_{-\{i,j\}}(u_{-\{i,j\}})=|\Sigma_{-\{i,j\}}|^{-1/2}\exp(-\frac{1}{2}z_{-\{i,j\}}^{T}(\Sigma_{-\{i,j\}}^{-1}-I_{d-2})z_{-\{i,j\}}).
	\end{equation}
	
	The pairwise Markov property  $X_i\perp X_j | X_{V\backslash\{i,j\} }$ requires that \eqref{main} be satisfied for all $u\in [0,1]^d$
	and in particular for $u=(0.5,\cdots, 0.5)$ which gives $z=(0,\cdots, 0)$. Therefore, 
	\eqref{main} implies that 
	\begin{equation}
		|\Sigma||\Sigma_{-\{i,j\}}| =  |\Sigma_{-\{i\}}| |\Sigma_{-\{j\}}|.\label{main2}
	\end{equation}
	We can write $\Sigma$, $\Sigma_{-\{i\}}$ and $\Sigma_{-\{j\}}$ as block matrices:
	\begin{equation*}
		\Sigma = \begin{bmatrix}
			\Sigma_{\{i,j\}} & \Sigma_{\{i,j\},-\{i,j\}} \\
			\Sigma_{-\{i,j\},\{i,j\}} & \Sigma_{-\{i,j\}}
		\end{bmatrix},\;
		\Sigma_{-\{i\}} = \begin{bmatrix}
			\Sigma_{jj} & \Sigma_{\{j\},-\{i,j\}} \\
			\Sigma_{-\{i,j\},\{j\}} & \Sigma_{-\{i,j\}}
		\end{bmatrix},
	\end{equation*}
	$$ \Sigma_{-\{j\}} = \begin{bmatrix}
		\Sigma_{ii} & \Sigma_{\{i\},-\{i,j\}} \\
		\Sigma_{-\{i,j\},\{i\}} & \Sigma_{-\{i,j\}}
	\end{bmatrix}. $$
	
	The determinants of these three matrices can be written as a product of the determinant of $\Sigma_{-\{i,j\}}$ and its  Schur complements 
	\cite{abadir2005}.

	\begin{equation}
		|\Sigma| = |\Sigma_{-\{i,j\}}| |	\Sigma_{\{i,j\}} - \Sigma_{\{i,j\},-\{i,j\}}\Sigma_{-\{i,j\}}^{-1} \Sigma_{-\{i,j\},\{i,j\}} |,
	\end{equation}
	\begin{equation}
		|\Sigma_{-\{i\}}| = |\Sigma_{-\{i,j\}}| |	\Sigma_{jj} - \Sigma_{\{j\},-\{i,j\}}\Sigma_{-\{i,j\}}^{-1} \Sigma_{-\{i,j\},\{j\}} |
	\end{equation}
	
	\begin{equation}
		|\Sigma_{-\{j\}}| = |\Sigma_{-\{i,j\}}| |	\Sigma_{ii} - \Sigma_{\{i\},-\{i,j\}}\Sigma_{-\{i,j\}}^{-1} \Sigma_{-\{i,j\},\{i\}} |.
	\end{equation}
	
	Hence, \eqref{main2} becomes
	\begin{eqnarray*}
		&|	\Sigma_{\{i,j\}} - \Sigma_{\{i,j\},-\{i,j\}}\Sigma_{-\{i,j\}}^{-1} \Sigma_{-\{i,j\},\{i,j\}} | = &\\
		&|	\Sigma_{jj} - \Sigma_{\{j\},-\{i,j\}}\Sigma_{-\{i,j\}}^{-1} \Sigma_{-\{i,j\},\{j\}} | |	\Sigma_{ii} - \Sigma_{\{i\},-\{i,j\}}\Sigma_{-\{i,j\}}^{-1} \Sigma_{-\{i,j\},\{i\}} |.&
	\end{eqnarray*}
	
	The  inversion of the partitioned matrices \cite{abadir2005}  gives 
	\begin{equation}
		\Sigma_{\{i,j\}} - \Sigma_{\{i,j\},-\{i,j\}}\Sigma_{-\{i,j\}}^{-1} \Sigma_{-\{i,j\},\{i,j\}}  = [K_{\{i,j\}}]^{-1},
	\end{equation}
	
	\begin{equation}
		\Sigma_{jj} - \Sigma_{\{j\},-\{i,j\}}\Sigma_{-\{i,j\}}^{-1} \Sigma_{-\{i,j\},\{j\}}  =  K_{jj}^{-1}
	\end{equation}
	and \begin{equation}
		\Sigma_{ii} - \Sigma_{\{i\},-\{i,j\}}\Sigma_{-\{i,j\}}^{-1} \Sigma_{-\{i,j\},\{i\}}  = K_{ii}^{-1}.
	\end{equation}
	Noting  that by symmetry
	\begin{equation*}
		K_{\{i,j\}} = \begin{bmatrix}
			K_{ii} & K_{ij} \\
			K_{ij} & K_{jj}
		\end{bmatrix}\; \text{and}\quad
		|[K_{\{i,j\}}]^{-1}|  = \frac{1}{|K_{\{i,j\} }|},
	\end{equation*}

	\eqref{main2} simplifies to
	\begin{equation}
		|[K_{\{i,j\}}]^{-1}|=|K_{ii}^{-1}||K_{ii}^{-1}|,
	\end{equation}
	which is is equivalent to  $K_{ij}=0$.
\end{proof}

\subsection{Student t copula Markov network }
\begin{proposition}
	Assume the dependence structure of  $\textbf{X}$ follows a t-copula with precision matrix $K$ and $\nu$ degrees of freedom.  If	$X_{i} \perp X_{j} | X_{V\backslash\{i, j\}}$, then 
	\begin{equation}\label{Keq}
		K_{ij}^2 = \alpha(\nu) K_{ii}K_{jj}, \quad \text{with }\quad \alpha(\nu) = 1- \frac{\Gamma\left(\frac{\nu+d}{2} -\frac{1}{2}\right)^4}{\Gamma\left(\frac{\nu+d}{2}\right)^2\Gamma\left(\frac{\nu+d}{2}-1 \right)^2 }.
	\end{equation}
Let $\rho_{ij}$ be the partial correlation coefficient of $X_i$ and $X_j$. then \eqref{Keq} is equivalent to 
\begin{equation}
	\rho_{ij}^2 = \alpha(\nu).
\end{equation}

\end{proposition}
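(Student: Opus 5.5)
The plan is to copy the proof of Proposition~\ref{p1}: evaluate the characterization \eqref{main} at the single point $u=(0.5,\ldots,0.5)$, where the $t$-quantiles vanish. This turns \eqref{main} into a relation between determinants and Gamma-function normalizing constants. The determinant part will then be rewritten in terms of $K$ with the same Schur-complement identities used in the Gaussian case.

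First I write the $m$-dimensional $t$-copula density with correlation matrix $R$ and $\nu$ degrees of freedom:
\begin{equation*}
c_R(u)=g_m\,|R|^{-1/2}\,\frac{\bigl(1+\nu^{-1}z^TR^{-1}z\bigr)^{-(\nu+m)/2}}{\prod_{k=1}^m\bigl(1+\nu^{-1}z_k^2\bigr)^{-(\nu+1)/2}},\qquad g_m=\frac{\Gamma\left(\frac{\nu+m}{2}\right)\Gamma\left(\frac{\nu}{2}\right)^{m-1}}{\Gamma\left(\frac{\nu+1}{2}\right)^m},
\end{equation*}
where $z_k=t_\nu^{-1}(u_k)$. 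I then use the standard fact that the marginals of a multivariate $t$ distribution are again $t$ with the same $\nu$ and the corresponding submatrix of $\Sigma$. Hence $c_{-\{i\}}$, $c_{-\{j\}}$ and $c_{-\{i,j\}}$ are $t$-copula densities of dimensions $d-1$, $d-1$, $d-2$, with matrices $\Sigma_{-\{i\}}$, $\Sigma_{-\{j\}}$, $\Sigma_{-\{i,j\}}$. At $u=(0.5,\ldots,0.5)$ we have $z=0$, so every density reduces to $g_m|R|^{-1/2}$. Condition \eqref{main} therefore gives
\begin{equation*}
g_d\,g_{d-2}\,\bigl(|\Sigma||\Sigma_{-\{i,j\}}|\bigr)^{-1/2}=g_{d-1}^2\,\bigl(|\Sigma_{-\{i\}}||\Sigma_{-\{j\}}|\bigr)^{-1/2}.
\end{equation*}

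Next I simplify the constants. The powers of $\Gamma(\nu/2)$ are $(d-1)+(d-3)=2(d-2)$ on both sides, and the powers of $\Gamma((\nu+1)/2)$ are $d+(d-2)=2(d-1)$ on both sides, so both cancel. This leaves $\Gamma\left(\frac{\nu+d}{2}\right)\Gamma\left(\frac{\nu+d}{2}-1\right)$ against $\Gamma\left(\frac{\nu+d}{2}-\frac12\right)^2$.

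For the determinants, the Schur-complement and partitioned-inverse identities from the proof of Proposition~\ref{p1} give
\begin{equation*}
\frac{|\Sigma_{-\{i\}}||\Sigma_{-\{j\}}|}{|\Sigma||\Sigma_{-\{i,j\}}|}=\frac{|K_{\{i,j\}}|}{K_{ii}K_{jj}}=1-\frac{K_{ij}^2}{K_{ii}K_{jj}}.
\end{equation*}
Combining the two pieces yields
\begin{equation*}
\sqrt{1-\frac{K_{ij}^2}{K_{ii}K_{jj}}}=\frac{\Gamma\left(\frac{\nu+d}{2}-\frac12\right)^2}{\Gamma\left(\frac{\nu+d}{2}\right)\Gamma\left(\frac{\nu+d}{2}-1\right)}.
\end{equation*}
Squaring and rearranging gives exactly \eqref{Keq}.

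The equivalence with the partial-correlation form is immediate from $\rho_{ij\cdot V\backslash\{i,j\}}=-K_{ij}/\sqrt{K_{ii}K_{jj}}$: dividing \eqref{Keq} by $K_{ii}K_{jj}>0$ gives $\rho_{ij}^2=\alpha(\nu)$.

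The main obstacle is bookkeeping the normalizing constants correctly, in particular that the marginal copulas keep the same $\nu$ while their dimension drops. The exponent counts for $\Gamma(\nu/2)$ and $\Gamma((\nu+1)/2)$ must match on both sides so that only the dimension-dependent Gammas survive. The Schur-complement step carries over verbatim from Proposition~\ref{p1}.
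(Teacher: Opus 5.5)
Your proposal is correct and follows the paper's proof essentially step for step: evaluate \eqref{main} at $u=(0.5,\ldots,0.5)$ where all $z_k=0$, cancel the common powers of $\Gamma(\nu/2)$ and $\Gamma((\nu+1)/2)$, and turn the determinant ratio into $1-K_{ij}^2/(K_{ii}K_{jj})$ with the Schur-complement identities from Proposition~\ref{p1}. You also fill in two points the paper leaves implicit: you justify that the lower-dimensional copulas are $t$-copulas with the same $\nu$, and you derive the partial-correlation form from $\rho_{ij}=-K_{ij}/\sqrt{K_{ii}K_{jj}}$.
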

\begin{proof}
	The density of the Student  t copula, with correlation matrix $\Sigma$ and $\nu$ degrees of freedom, is given by
	\begin{equation}
		c(u):=c(u_1,\ldots,u_d ) =|\Sigma|^{-1/2}\frac{\Gamma(\frac{\nu+d}{2})}{\Gamma(\frac{\nu}{2})}\left[\frac{\Gamma(\frac{\nu}{2})}{\Gamma(\frac{\nu+1}{2})}\right]^d\frac{\left(1+\frac{1}{\nu}z^{T}\Sigma^{-1}z\right)^{-\frac{\nu+d}{2}}}{\prod_{k=1}^{d}\left(1+\frac{z_k^2}{\nu}\right)^{-\frac{\nu+1}{2}}},
	\end{equation}
	where $z=(z_1,\ldots,z_d)^T$ is the transpose of $z$, with $z_k=T^{-1}_{\nu}(u_k),\, k=1,\ldots,d$ and $T_{\nu}$ is the cumulative distribution function of the standard univariate Student law with $\nu$ degrees of freedom. \\
	We have 
	%	where $z_{-\{i,j\}}$ 
	\begin{equation}
		c_{-\{i\}}(u_{-\{i\}}) =|\Sigma_{-\{i\}}|^{-1/2}\frac{\Gamma(\frac{\nu+d-1}{2})}{\Gamma(\frac{\nu}{2})}\left[\frac{\Gamma(\frac{\nu}{2})}{\Gamma(\frac{\nu+1}{2})}\right]^{d-1}\frac{\left(1+\frac{1}{\nu}z_{-\{i\}}^{T}\Sigma^{-1}_{-\{i\}}z_{-\{i\}}\right)^{-\frac{\nu+d-1}{2}}}{\prod_{k\neq i}\left(1+\frac{z_k^2}{\nu}\right)^{-\frac{\nu+1}{2}}},
	\end{equation}
	
	\begin{equation}
		c_{-\{j\}}(u_{-\{j\}}) =|\Sigma_{-\{j\}}|^{-1/2}\frac{\Gamma(\frac{\nu+d-1}{2})}{\Gamma(\frac{\nu}{2})}\left[\frac{\Gamma(\frac{\nu}{2})}{\Gamma(\frac{\nu+1}{2})}\right]^{d-1}\frac{\left(1+\frac{1}{\nu}z_{-\{j\}}^{T}\Sigma^{-1}_{-\{j\}}z_{-\{j\}}\right)^{-\frac{\nu+d-1}{2}}}{\prod_{k\neq j}\left(1+\frac{z_k^2}{\nu}\right)^{-\frac{\nu+1}{2}}},
	\end{equation}
	
	\begin{equation}
		c_{-\{i,j\}}(u_{-\{i,j\}} =|\Sigma_{-\{i,j\}}|^{-1/2}\frac{\Gamma(\frac{\nu+d-2}{2})}{\Gamma(\frac{\nu}{2})}\left[\frac{\Gamma(\frac{\nu}{2})}{\Gamma(\frac{\nu+1}{2})}\right]^{d-2}\frac{\left(1+\frac{1}{\nu}z_{-\{i,j\}}^{T}\Sigma^{-1}_{-\{i,j\}}z_{-\{i,j\}}\right)^{-\frac{\nu+d-2}{2}}}{\prod_{k\neq i,j}\left(1+\frac{z_k^2}{\nu}\right)^{-\frac{\nu+1}{2}}}.
	\end{equation}
	Then, for $u=(0.5,\cdots, 0.5)$ which gives $z=(0,\cdots, 0)$,  Equation \eqref{main} becomes
	\begin{eqnarray*}
		|\Sigma|^{-1/2}\frac{\Gamma(\frac{\nu+d}{2})}{\Gamma(\frac{\nu}{2})}|\Sigma_{-\{i,j\}}|^{-1/2}\frac{\Gamma(\frac{\nu+d-2}{2})}{\Gamma(\frac{\nu}{2})} &=& |\Sigma_{-\{i\}}|^{-1/2}\frac{\Gamma(\frac{\nu+d-1}{2})}{\Gamma(\frac{\nu}{2})} |\Sigma_{-\{j\}}|^{-1/2}\frac{\Gamma(\frac{\nu+d-1}{2})}{\Gamma(\frac{\nu}{2})}.
	\end{eqnarray*}	
	Simplifying, we obtain
	\begin{equation}\label{ss}
		\left(\frac{|\Sigma|.|\Sigma_{-\{i,j\}}|}{|\Sigma_{-\{i\}}|.|\Sigma_{-\{j\}}|} \right)^{-1/2}= \frac{\Gamma\left(\frac{\nu+d-1}{2} \right)^2}{\Gamma\left(\frac{\nu+d}{2}\right)\Gamma\left(\frac{\nu+d-2}{2} \right)}.	  .
	\end{equation}
	As in the Gaussian case (see, Statements 14--19), we have
	\begin{eqnarray*}
		|\Sigma|&=& |\Sigma_{-\{i,j\}}|.|K_{\{i,j\}}^{-1}|\\
		|\Sigma_{-\{i\}}|&=& |\Sigma_{-\{i,j\}}|.|K_{jj}^{-1}|\\
		|\Sigma_{-\{j\}}|&=& |\Sigma_{-\{i,j\}}|.|K_{ii}^{-1}|.
	\end{eqnarray*}
	and \eqref{ss} yields
	\begin{equation}
		\left(\frac{|K_{\{i,j\}}^{-1}|}{|K_{ii}^{-1}|.|K_{jj}^{-1}|} \right)^{-1/2}= \frac{\Gamma\left(\frac{\nu+d-1}{2} \right)^2}{\Gamma\left(\frac{\nu+d}{2}\right)\Gamma\left(\frac{\nu+d-2}{2} \right)},	  		
	\end{equation}
	with 	$
		K_{\{i,j\}} = \begin{bmatrix}
			K_{ii} & K_{ij} \\
			K_{ij} & K_{jj}
		\end{bmatrix}$,
	$\quad |K_{\{i,j\}}^{-1}|  = |K_{\{i,j\}}|^{-1}=(K_{ii}K_{jj}-K_{ij}^2)^{-1}$, 
	 $\quad |K_{ii}^{-1}|=K_{ii}^{-1}$ ,  $\, |K_{jj}^{-1}|=K_{jj}^{-1}.$
	 
	Therefore,
	\begin{equation}
		\left(\frac{K_{ii}K_{jj}-K_{ij}^2}{K_{ii}.K_{jj}} \right)^{1/2}= \frac{\Gamma\left(\frac{\nu+d-1}{2} \right)^2}{\Gamma\left(\frac{\nu+d}{2}\right)\Gamma\left(\frac{\nu+d-2}{2} \right)},	  		
	\end{equation}
	which is equivalent to
	\begin{equation}
		1-\frac{K_{ij}^2}{K_{ii}.K_{jj}} = \frac{\Gamma\left(\frac{\nu+d-1}{2} \right)^4}{\Gamma\left(\frac{\nu+d}{2}\right)^2\Gamma\left(\frac{\nu+d-2}{2} \right)^2},	  		
	\end{equation}	
	i.e.,
	\begin{equation}
		\frac{K_{ij}^2}{K_{ii}.K_{jj}}=\alpha(\nu), \quad \text{with } \alpha(\nu) = 1- \frac{\Gamma\left(\frac{\nu+d-1}{2}\right)^4}{\Gamma\left(\frac{\nu+d}{2}\right)^2\Gamma\left(\frac{\nu+d-2}{2} \right)^2 }.
	\end{equation}
	
\end{proof}
\subsection{Archimedian copulas}

Copulas models with few real parameters such as the	Archimedian  copulas are incompatible with pairwise Markov properties. We illustrate this with the Clayton copula.

The density of the Clayton copula with parameter $\theta$ is given by 
\begin{equation}
	c_\theta(u_1,\dots,u_d)
	=
	\left( \prod_{k=1}^{d-1} (1 + k\theta) \right)
	\left( \prod_{i=1}^d u_i^{-(1+\theta)} \right)
	\left( \sum_{i=1}^d u_i^{-\theta} - d + 1 \right)^{-(d + 1/\theta)}
\end{equation}
with $\theta > -\frac{1}{d}$ and $\theta \neq 0.$
In the vicinity of $u=(1,\cdots,1)$, Equation \eqref{main}  gives 
\begin{align*}
	&\left( \prod_{k=1}^{d-1} (1 + k\theta) \right)\left( \prod_{k=1}^{d-3} (1 + k\theta) \right)  = \left( \prod_{k=1}^{d-2} (1 + k\theta) \right)^2, \\
	\implies &  1 + (d-1)\theta = 1 + (d-2)\theta,
\end{align*}
which has no solution, except $\theta=0$.

\section{ECM algorithm for graph selection in Student t copula Markov network}
\label{sec:3}
\subsection{Scale-mixture representation of  t-distribution} 
 The Student $t$-copula with correlation matrix $\Sigma$ and $\nu$ degrees of freedom, $C_{\Sigma,\nu}$, is an elliptical copula with common marginal distributions.
 Since all the dependence information is contained in the copula,  the marginal distributions do no affect the estimation of $\Sigma$ and $\nu$ or the corresponding conditional independence graph. Therefore, there is no loss of generality in assuming Student marginals or any other marginals. Let us assume Student marginals. This corresponds to a multivariate t distribution.

The $d$-multivariate $t$-distribution can be represented as a scale-mixture: If $Y \sim t_d(0,\Sigma,\nu)$ the multivariate $t$ distribution, we can write  $Y = X/\sqrt{\tau} $, where  $X\sim N_d(0,\Sigma)$ is normally distributed,  and $\tau \sim \Gamma(\nu/2, \nu/2)$ is a Gamma random variable independent of $Y$ (See for instance \cite{liuRubin1995,finegold2014}).

Since the conditional independence constraint is $K_{ij}^2=\alpha(\nu) K_{ii}K_{jj}$, estimates of the precision matrix $K$ and the degree of freedom $\nu$ can be obtained by maximizing the penalized log-likelihood of the data,
using as penalty $\lambda ||\widetilde{K}||_1$, where $\widetilde{K}$ is a matrix with entries $\widetilde{K}_{ij} = K_{ij}^2-\alpha(\nu) K_{ii}K_{jj}$, and the $L_1$ norm of a matrix is the sum of the absolute values of the matrix entries.

\subsection{Penalized maximum likelihood estimation via the ECM algorithm }
Consider $Y_1,\ldots,Y_n$ a random sample  from a $
d$-dimensional distribution whose associated copula is $C_{\Sigma, \nu}$, and set $Y_i=X_i/\sqrt{\tau_i}$, with $X_i\sim N_d(0,\Sigma)$ and  $\tau_1,\ldots,\tau_n$ is a sequence of latent random variables that are independent and identically Gamma distributed, and independent of the $X_i's$. Let $K=\Sigma^{-1}$ be the precision matrix. Estimates of $K$ and the degrees of freedom $\nu$ will be obtained  via an \textit{ECM} algorithm  (see e.g. \cite{meng1993}).
% the penalized complete log-likelihood of the data.
%using as penalty $\lambda ||\widetilde{K}||_1$, where  $\widetilde{K}_{ij} = K_{ij}^2 - \alpha(\nu)  K_{ii}K_{jj}$, and the $L_1$ norm of a matrix is the sum of the absolute values of its entries.

The complete data log likelihood is
\[ \ell(K,\nu) = \sum_{i=1}^n \Bigl[ \log f(x_i\mid\tau_i,K) + \log f(\tau_i\mid\nu) \Bigr]. \]

Substituting the Gaussian and Gamma densities, and dropping constants gives

\begin{align*}
	\ell(K,\nu) &=  \frac{n}{2}\log|K| -\frac12 \operatorname{tr} \!\left( K\sum_{i=1}^n \tau_i x_ix_i^T \right)\\
	&\qquad + \quad n\frac{\nu}{2}\log\frac{\nu}{2} - n\log\Gamma\!\left(\frac{\nu}{2}\right) + \left(\frac{\nu}{2}-1\right) \sum_{i=1}^n \log\tau_i - \frac{\nu}{2} \sum_{i=1}^n \tau_i.
\end{align*}

The derivative of the complete log likelihood function with respect to $K$ is 
\begin{equation}
	\frac{\partial \ell}{\partial K} = \frac{n}{2}K^{-1} - \frac{1}{2} \sum_{i=1}^{n} \tau_ix_ix_i^T . 
\end{equation}

The derivative of the complete log likelihood function with respect to $\nu$ is 
\begin{equation}
	\frac{\partial \ell}{\partial \nu} = \frac{n}{2} \left[ \log\frac{\nu}{2} +1 -\psi\!\left(\frac{\nu}{2}\right) \right] +\frac12 \sum_{i=1}^n \log\tau_i -\frac12 \sum_{i=1}^n \tau_i , 
\end{equation}

where $\psi$ is the digamma function.

The lasso-type penalty is \[ P(K,\nu) = \lambda \sum_{i<j} \left| \widetilde K_{ij} \right| = \lambda \sum_{i<j} \left| K_{ij}^{2} - \alpha(\nu)K_{ii}K_{jj} \right|. \] 

For \(i\neq j\), both fixed, \[ \frac{\partial P}{\partial K_{ij}} = 2\lambda s_{ij}K_{ij} , \qquad s_{ij} = \operatorname{sign}(\widetilde K_{ij})  = \begin{cases}
	1 & \text{if } \widetilde K_{ij} >0\\
	-1 & \text{if } \widetilde K_{ij} < 0\\
	0	& \text{if } \widetilde K_{ij} =0
\end{cases}.\] 
and for every fixed $k$,
\[ \frac{\partial P}{\partial K_{kk}} = -\lambda \alpha(\nu) \left(\sum_{i< k} s_{ik}K_{ii} + \sum_{j> k} s_{kj}K_{jj} \right)=  -\lambda \alpha(\nu) \sum_{j\neq k} s_{kj}K_{jj} = -\lambda \alpha(\nu) \sum_{j=1}^d s_{kj}K_{jj}. \] 

Therefore, the derivative of the penalty with respect to $K$ is 
\begin{equation}
	\frac{\partial P}{\partial K} = 2\lambda  S \circ K - \lambda \alpha(\nu) \operatorname{Diag}(S\operatorname{diag}(K)),
\end{equation}
where $\circ$ is the Hadamard product, $\operatorname{diag}(A)$ is the vector of diagonal elements of the matrix $A$, and  $\operatorname{Diag}(\mathbf{a})$ is the diagonal matrix with $\mathbf{a}$ as diagonal.

Let $\Lambda(x) = \dfrac{\Gamma\left(x-\tfrac{1}{2}\right)^4}{\Gamma(x)^2\Gamma(x-1)^2}$. Noting that

\[ \Lambda'(x) = 2\Lambda(x) \left[ 2\psi\!\left(x-\frac12\right) -\psi(x) -\psi(x-1) \right], \]
the derivative of the penalty with respect to $\nu$ is 

\begin{equation}
	\frac{\partial P}{\partial \nu} = \lambda\Lambda\!\left(\frac{\nu+d}{2}\right) \left[ 2\psi\!\left(\frac{\nu+d-1}{2}\right) -\psi\!\left(\frac{\nu+d}{2}\right) -\psi\!\left(\frac{\nu+d-2}{2}\right) \right] \sum_{i<j} s_{ij}K_{ii}K_{jj}. 
\end{equation}
The derivatives of the  objective function $Q(K, \nu) = \ell(K, \nu) - P(K, \nu)$ with respect to $K$ and $\nu$ are respectively,

\[\frac{\partial Q}{\partial K} = \frac{\partial \ell}{\partial K} - \frac{\partial P}{\partial K} \] and 
\[ \frac{\partial Q}{\partial \nu} = \frac{\partial \ell}{\partial \nu} - \frac{\partial P}{\partial \nu}.\]

Treating the  latent variables \( \tau_1,\ldots,\tau_n \)  as missing data, we can employ the following \textit{ECM} algorithm  for computing the maximum likelihood estimates of $K$ and $\nu$.
\subsubsection*{\textbf{E-step:}} 
Given the current parameter estimates $K^{(t)}$ and $\nu^{(t)}$, the E-step computes $w_i = E(\tau_i\mid x_i,K^{(t)},\nu^{(t)})$ and $u_i = E(\log\tau_i\mid x_i,K^{(t)},\nu^{(t)}).$

But 
\begin{equation}
	\tau_i | X_i \sim \Gamma\left(\dfrac{\nu+d}{2},    \dfrac{\nu + x_i^T K x_i^T}{2} \right).
\end{equation}
Therefore, $w_i = \dfrac{\nu^{(t)} + d }{\nu^{(t)} + x_i^T K^{(t)} x_i^T}$ and $u_i = \psi\left(\dfrac{\nu^{(t)}+d}{2}\right) - \log\left(\dfrac{\nu^{(t)} + x_i^T K^{(t)} x_i^T}{2}\right)$.

\subsubsection*{\textbf{CM-step}:}
The CM-step maximizes the penalized expected complete-data log-likelihood with respect to \(K\) keeping $\nu$ fixed, and with respect to \(\nu\), keeping $K$ fixed.

\begin{enumerate}
\item  \textbf{Solving for $\nu$}

  $\nu^{(t+1)}$ is the solution of 
\begin{align*}
	h(\nu) &= \frac{n}{2} \left[ \log\frac{\nu}{2} +1 -\psi\!\left(\frac{\nu}{2}\right) \right] +\frac12 \sum_{i=1}^n \log\tau_i -\frac12 \sum_{i=1}^n \tau_i  
	- \lambda\Lambda'\!\left(\frac{\nu+d}{2}\right)  \sum_{i<j} s_{ij}K_{ii}K_{jj} =0.
\end{align*}
We will use Brent's method \cite{brent1973} as implemented in the \textsc{R} function \textsf{uniroot()} to find a root of $h$.

\item \textbf{Solving for $K$}

Directly solving the equation 

\begin{equation}
	\frac{n}{2}K^{-1} - \frac{1}{2} \sum_{i=1}^{n} \tau_ix_ix_i^T  + 2\lambda  S \circ K - \lambda \alpha(\nu) \operatorname{Diag}(S\operatorname{diag}(K)) = 0
\end{equation}
is not feasible. We will use the Alternating Direction Method of Multipliers algorithm (ADMM) (See for example \cite{boyd2011,danaher2014}) to find 	  
$K^{(t+1)} = \operatorname{argmax}_{K} \ell(K,\nu) - P(K,\nu) $, written in ADMM form as 	 
\begin{equation}
	\underset{{K,J}}{\operatorname{argmin}} \quad \frac12 \operatorname{tr} \!\left( K\sum_{i=1}^n \tau_i x_ix_i^T \right) - \frac{n}{2}\log|K| +   \lambda \sum_{i<j} \left| J_{ij}^{2} - \alpha(\nu)J_{ii}J_{jj} \right| \quad \text{s.t } K=J.
\end{equation}
The scaled augmented Lagrangian is
\begin{equation}
	\frac12 \operatorname{tr} \!\left( K\sum_{i=1}^n \tau_i x_ix_i^T \right) - \frac{n}{2}\log|K| 
	+   \lambda \sum_{i<j} \left| J_{ij}^{2} - \alpha(\nu)J_{ii}J_{jj} \right| + \frac{\rho}{2} \operatorname{tr}\bigl[(K-J+U)^T (K-J+U)\bigr].
\end{equation}
The ADMM updates are as follows:

\begin{align*}
	K^{(k+1)} &=  \underset{{K}}{\operatorname{argmin}} \left( \frac12 \operatorname{tr} \!\left( K\sum_{i=1}^n \tau_i x_ix_i^T \right) - \frac{n}{2}\log|K|   + \frac{\rho}{2} \operatorname{tr}\bigl[(K-J^{(k)}+U^{(k)})^T (K-J^{(k)}+U^{(k)})\bigr]   \right)\\
	J^{(k+1)} &= \underset{{J}}{\operatorname{argmin}} \left(  \lambda \sum_{i<j} \left| J_{ij}^{2} - \alpha(\nu)J_{ii}J_{jj} \right| + \frac{\rho}{2} \operatorname{tr}\bigl[(K^{(k+1)}-J+U^{(k)})^T (K^{(k+1)}-J+U^{(k)})\bigr]  \right)\\
	U^{(k+1)} &= U^{(k)} + K^{(k+1)} - J^{(k+1)}
\end{align*}

\begin{enumerate}
	\item \underline{$K^{(k+1)}$ update}:
	
	Setting the derivative of the first objective function of the first equation, gives 
	$$\displaystyle\sum_{i=1}^n \tau_i x_ix_i^T -n K^{-1} +2\rho  (K-J+U) =0$$ which implies
	
	$$K - \dfrac{n}{2\rho } K^{-1}  = (U-J) - \dfrac{1}{2\rho }\displaystyle\sum_{i=1}^n \tau_i x_ix_i^T = Q D_\delta Q^T.$$
	
	Therefore 
	$$Q^TKQ - \dfrac{n}{2\rho } Q^TK^{-1}Q = D_\delta.$$ Let $k_1, k_2,\cdots, k_d$ be the diagonal elements of $Q^TKQ$. They satisfy
	$$k_i - \dfrac{n}{2\rho k_i} = \delta_i \implies  k_i^2 - {\delta_i}k_i - \dfrac{n}{2\rho} =0 \implies k_i = \dfrac{\delta_i + \sqrt{\delta_i^2 +  2n/\rho}}{2}.$$
	
	Hence, \begin{equation}
		K^{(k+1)} = Q \operatorname{Diag}(k_1,\cdots, K_d)Q^T.
	\end{equation}
	
	\item \underline{$J^{(k+1)}$ update:}
	Block coordinate descent \cite{shi2016,wright2015} will be applied.  Off-diagonal and diagonal entries of $J$ will be updated successively while keeping the other fixed. 
	The objective function can be written as 
	\begin{align*}
		g(J) &=  \lambda \sum_{i<j} \left| J_{ij}^{2} - \alpha(\nu)J_{ii}J_{jj} \right| + \frac{\rho}{2} \operatorname{tr}\bigl[(J -B)^T (J-B)\bigr] , \quad \text{where } \, B = K+U\\
		&=  \lambda \sum_{i<j} \left| J_{ij}^{2} - \alpha(\nu)J_{ii}J_{jj} \right| + \frac{\rho}{2}\sum_{i=1}^{d}(J_{ii} - B_{ii})^2 
		+ \rho \sum_{i<j} (J_{ij} - B_{ij})^2.
	\end{align*}
	\begin{itemize}
		\item Off-diagonal coordinate  $J_{ij}$ update with all other entries constant.
		Dropping constants, the objective function is
		\begin{align*}
			g(J_{ij}) &= \lambda  \left| J_{ij}^{2} - a  \right| 
			+ \rho (J_{ij} - B_{ij})^2, \quad \text{where } a=\alpha(\nu)J_{ii}J_{jj}\\
			&= \begin{cases}
				(\lambda +\rho) J_{ij}^{2} - 2\rho B_{ij}J_{ij} - \lambda a +\rho B_{ij}^2 & \text{if } J_{ij}\geq a\\
				(\rho -\lambda) J_{ij}^{2} - 2\rho B_{ij}J_{ij} + \lambda a +\rho B_{ij}^2 & \text{if } J_{ij} < a.\\
			\end{cases}
		\end{align*}
	Derivating this objective function, we obtain the $J_{ij}^{(m+1)}$ update:	
		\begin{equation}
			J_{ij}^{(m+1)} = \begin{cases}
				\dfrac{\rho B_{ij}}{\lambda +\rho}  & \text{if } J_{ij}^2 \geq a\\
				\dfrac{\rho B_{ij}}{\rho - \lambda } & \text{if } J_{ij}^2 < a\\
			\end{cases}
		\end{equation}
		\item Diagonal coordinate $J_{kk}$ update with all other entries constant.
		
		Dropping constants, the objective function is
		\begin{align*}
			g(J_{kk}) &= \lambda \sum_{j=k+1}^{d} \left| J_{kj}^{2} - \alpha(\nu)J_{kk}J_{jj} \right|+  \lambda \sum_{i = 1}^{k-1} \left| J_{ik}^{2} - \alpha(\nu)J_{ii}J_{kk} \right|  + \frac{\rho}{2}(J_{kk} - B_{kk})^2 \\
			&= \lambda \sum_{j \neq k} \left| J_{kj}^{2} - \alpha(\nu)J_{kk}J_{jj} \right|   + \frac{\rho}{2}(J_{kk} - B_{kk})^2
		\end{align*}
		is a convex  function.
		Between any two consecutive breakpoints $t_j = \frac{J_{kj}^2}{\alpha(\nu) J_{jj}}$, it is a quadratic function with derivative 
		\begin{equation}
			g'(J_{kk})= - \lambda\alpha(\nu)  \sum_{j \neq k}s_jJ_{jj} + \rho J_{kk} - \rho B_{kk}, \; \text{where } s_j = \operatorname{sign}(J_{kj}^{2} - \alpha(\nu)J_{kk}J_{jj}).
		\end{equation}
		
		Let $t_j^{(p)}$ is the $p$-th smallest of the breakpoints $t_j$'s. Because of the convexity of $g$, only one of the intervals
		
		$(-\infty,\, t_j^{(1)}), (t_j^{(1)},\, t_j^{(2)}), \cdots, (t_j^{(d-2)},\, t_j^{(d-1)}), (t_j^{(d-1)},\,\infty)$ contains its corresponding root 
		\begin{equation}
			J_{kk}^{(t+1)}= B_{kk} + \frac{\lambda \alpha(\nu)}{\rho}  \sum_{j \neq k}s_jJ_{jj}.
		\end{equation}
		
	\end{itemize}
	\item  \underline{$U^{(k+1)}$ update}: $U^{(k+1)} = U^{(k)} + K^{(k+1)} - J^{(k+1)}$.\\
\end{enumerate} 

\end{enumerate}

\section{Simulation Results}
\label{sec:4}

We simulated two sets of  2000 observations from a 10-dimensional Student copula distribution where the conditional independence relations were encoded in an $A_{10}$ graph:

\begin{tikzpicture}[ every node/.style={circle,draw,minimum size=1mm} ]
	 \node (1) at (0.0,0) [label=above:$X_1$] {}; \node (2) at (1.1,0) [label=above:$X_2$] {}; \node (3) at (2.2,0) [label=above:$X_3$] {}; \node (4) at (3.3,0) [label=above:$X_4$] {}; \node (5) at (4.4,0) [label=above:$X_5$] {}; \node (6) at (5.5,0) [label=above:$X_6$] {}; \node (7) at (6.6,0) [label=above:$X_7$] {}; \node (8) at (7.7,0) [label=above:$X_8$] {}; \node (9) at (8.8,0) [label=above:$X_9$] {}; \node (10) at (9.9,0) [label=above:$X_{10}$] {}; \draw (1) -- (2); \draw (2) -- (3); \draw (3) -- (4); \draw (4) -- (5); \draw (5) -- (6); \draw (6) -- (7); \draw (7) -- (8); \draw (8) -- (9); \draw (9) -- (10); 
\end{tikzpicture}

The first generated data uses Student marginals. The second generated data  uses a mix of symmetric, left-skewed, right-skewed, and  heavy tailed distributions. They include normal, Student, lognormal, gamma, Weibull, chi-squared, beta, and  logistic distributions).

The true parameters are $\nu = 4$  and

\[ K = \begin{pmatrix} 2 & -1.18 & 0.57 & -0.57 & 0.57 & -0.57 & 0.57 & -0.57 & 0.57 & -0.57 \\ -1.18 & 2 & -1.18 & 0.57 & -0.57 & 0.57 & -0.57 & 0.57 & -0.57 & 0.57 \\ 0.57 & -1.18 & 2 & -1.18 & 0.57 & -0.57 & 0.57 & -0.57 & 0.57 & -0.57 \\ -0.57 & 0.57 & -1.18 & 2 & -1.18 & 0.57 & -0.57 & 0.57 & -0.57 & 0.57 \\ 0.57 & -0.57 & 0.57 & -1.18 & 2 & -1.18 & 0.57 & -0.57 & 0.57 & -0.57 \\ -0.57 & 0.57 & -0.57 & 0.57 & -1.18 & 2 & -1.18 & 0.57 & -0.57 & 0.57 \\ 0.57 & -0.57 & 0.57 & -0.57 & 0.57 & -1.18 & 2 & -1.18 & 0.57 & -0.57 \\ -0.57 & 0.57 & -0.57 & 0.57 & -0.57 & 0.57 & -1.18 & 2 & -1.18 & 0.57 \\ 0.57 & -0.57 & 0.57 & -0.57 & 0.57 & -0.57 & 0.57 & -1.18 & 2 & -1.18 \\ -0.57 & 0.57 & -0.57 & 0.57 & -0.57 & 0.57 & -0.57 & 0.57 & -1.18 & 2 \end{pmatrix}. \]

Thus, $\alpha(\nu) \approx 0.0799$, $\sqrt{\alpha(\nu)} \approx  0.2827$,  and the true partial correlation matrix is 

\[\boldsymbol{\rho} \approx  \begin{pmatrix} 1 & -0.59 & 0.28 & -0.28 & 0.28 & -0.28 & 0.28 & -0.28 & 0.28 & -0.28 \\ -0.59 & 1 & -0.59 & 0.28 & -0.28 & 0.28 & -0.28 & 0.28 & -0.28 & 0.28 \\ 0.28 & -0.59 & 1 & -0.59 & 0.28 & -0.28 & 0.28 & -0.28 & 0.28 & -0.28 \\ -0.28 & 0.28 & -0.59 & 1 & -0.59 & 0.28 & -0.28 & 0.28 & -0.28 & 0.28 \\ 0.28 & -0.28 & 0.28 & -0.59 & 1 & -0.59 & 0.28 & -0.28 & 0.28 & -0.28 \\ -0.28 & 0.28 & -0.28 & 0.28 & -0.59 & 1 & -0.59 & 0.28 & -0.28 & 0.28 \\ 0.28 & -0.28 & 0.28 & -0.28 & 0.28 & -0.59 & 1 & -0.59 & 0.28 & -0.28 \\ -0.28 & 0.28 & -0.28 & 0.28 & -0.28 & 0.28 & -0.59 & 1 & -0.59 & 0.28 \\ 0.28 & -0.28 & 0.28 & -0.28 & 0.28 & -0.28 & 0.28 & -0.59 & 1 & -0.59 \\ -0.28 & 0.28 & -0.28 & 0.28 & -0.28 & 0.28 & -0.28 & 0.28 & -0.59 & 1 \end{pmatrix} .\]

We implemented 200 iterations of the ECM algorithm, with initial values $\nu=1$ and $K = I$,  the identity matrix, and a convergence breaking when neither $\nu$ nor any element of $K$ changed by more than $10^{-6}$ from one iteration to the next.

\subsection*{Results with first dataset:}

The algorithm converged after just 24 iterations with $\widehat{\nu} = 4.22$ and 

\[\boldsymbol{\widehat{\rho}} = \begin{pmatrix} 1 & -0.58 & 0.25 & -0.23 & 0.25 & -0.27 & 0.25 & -0.25 & 0.27 & -0.29 \\ -0.58 & 1 & -0.58 & 0.28 & -0.28 & 0.30 & -0.29 & 0.29 & -0.30 & 0.31 \\ 0.25 & -0.58 & 1 & -0.59 & 0.28 & -0.28 & 0.27 & -0.28 & 0.30 & -0.28 \\ -0.23 & 0.28 & -0.59 & 1 & -0.59 & 0.28 & -0.27 & 0.27 & -0.29 & 0.28 \\ 0.25 & -0.28 & 0.28 & -0.59 & 1 & -0.58 & 0.26 & -0.28 & 0.29 & -0.28 \\ -0.27 & 0.30 & -0.28 & 0.28 & -0.58 & 1 & -0.58 & 0.24 & -0.25 & 0.27 \\ 0.25 & -0.29 & 0.27 & -0.27 & 0.26 & -0.58 & 1 & -0.55 & 0.25 & -0.25 \\ -0.25 & 0.29 & -0.28 & 0.27 & -0.28 & 0.24 & -0.55 & 1 & -0.59 & 0.25 \\ 0.27 & -0.30 & 0.30 & -0.29 & 0.29 & -0.25 & 0.25 & -0.59 & 1 & -0.58 \\ -0.29 & 0.31 & -0.28 & 0.28 & -0.28 & 0.27 & -0.25 & 0.25 & -0.58 & 1 \end{pmatrix}. \]
Thus, $\sqrt{\widehat{\alpha(\nu)}} \approx  0.28$.

The relative $L_1$ error is $\frac{||\boldsymbol{\widehat{\rho}}-\boldsymbol{{\rho}}||_1}{||\boldsymbol{{\rho}}||_1} = 5\%$.

The maximum discrepancy between $\boldsymbol{\widehat{\rho}}_{ij}$ and $\sqrt{\widehat{\alpha(\nu)}}$ is $18\%$, the mean discrepancy is $5.6\%$, and the median discrepancy is $4.7\%$.

Allowing a maximumun discrepancy of  $20\%$, the true graph is recovered perfectly. 

For comparison, with the glasso algorithm \cite{Friedman2008}, we obtained

\[ \boldsymbol{\widehat{\rho}} =  \begin{pmatrix} 1 & -0.37 & 0 & 0 & 0 & 0 & 0 & 0 & 0 & 0 \\ -0.37 & 1 & -0.34 & 0 & 0 & 0 & 0 & 0 & 0 & 0 \\ 0 & -0.34 & 1 & -0.37 & 0 & -0.01 & 0 & 0 & 0 & 0 \\ 0 & 0 & -0.37 & 1 & -0.37 & 0 & 0 & 0 & 0 & 0 \\ 0 & 0 & 0 & -0.37 & 1 & -0.35 & 0 & -0.03 & 0 & 0 \\ 0 & 0 & -0.01 & 0 & -0.35 & 1 & -0.37 & 0 & 0 & 0 \\ 0 & 0 & 0 & 0 & 0 & -0.37 & 1 & -0.33 & 0 & -0.02 \\ 0 & 0 & 0 & 0 & -0.03 & 0 & -0.33 & 1 & -0.37 & 0 \\ 0 & 0 & 0 & 0 & 0 & 0 & 0 & -0.37 & 1 & -0.37 \\ 0 & 0 & 0 & 0 & 0 & 0 & -0.02 & 0 & -0.37 & 1 \end{pmatrix} \]
 corresponding to a relative $L_1$ error of $55\%$.

\subsection*{Results with second dataset:}

The algorithm did not converge with the breaking rule $10^{-6}$, reducing this requirement to $10^{-4}$, the algorithm converged after $36$ iterations, yielding estimates  $\widehat{\nu} = 3.47$  and 

\[ \boldsymbol{\widehat{\rho}} = \begin{pmatrix} 1 & -0.53 & 0.17 & -0.13 & 0.19 & -0.22 & 0.19 & -0.19 & 0.11 & -0.17 \\ -0.53 & 1 & -0.46 & 0.10 & -0.20 & 0.24 & -0.26 & 0.29 & 0.12 & 0.11 \\ 0.17 & -0.46 & 1 & -0.46 & 0.21 & -0.24 & 0.24 & -0.29 & -0.10 & -0.07 \\ -0.13 & 0.10 & -0.46 & 1 & -0.49 & 0.16 & -0.10 & 0.01 & -0.44 & 0.23 \\ 0.19 & -0.20 & 0.21 & -0.49 & 1 & -0.53 & 0.22 & -0.29 & 0.23 & -0.14 \\ -0.22 & 0.24 & -0.24 & 0.16 & -0.53 & 1 & -0.58 & 0.29 & -0.25 & 0.15 \\ 0.19 & -0.26 & 0.24 & -0.10 & 0.22 & -0.58 & 1 & -0.51 & -0.06 & -0.12 \\ -0.19 & 0.29 & -0.29 & 0.01 & -0.29 & 0.29 & -0.51 & 1 & -0.29 & 0 \\ 0.11 & 0.12 & -0.10 & -0.44 & 0.23 & -0.25 & -0.06 & -0.29 & 1 & -0.15 \\ -0.17 & 0.11 & -0.07 & 0.23 & -0.14 & 0.15 & -0.12 & 0 & -0.15 & 1 \end{pmatrix} \]

Thus, $\sqrt{\widehat{\alpha(\nu)}} \approx  0.29$.  

The relative $L_1$ error is $\frac{||\boldsymbol{\widehat{\rho}}-\boldsymbol{{\rho}}||_1}{||\boldsymbol{{\rho}}||_1} = 24\%$.

The maximum discrepancy between $\boldsymbol{\widehat{\rho}}_{ij}$ and $\sqrt{\widehat{\alpha(\nu)}}$ was $99\%$, the average discrepancy was $40\%$ and the median discrepancy was $39\%$.
Allowing a discrepancy of $20\%$, $28\%$ percent of the $36$ missing edges were correctly identified, while only one true edge ($11\%$) was not recovered. Raising the allowed discrepancy to $50\%$, $61\%$ of the missing edges were correctly identified, while all true edges were perfectly recovered.

For comparison, with the glasso algorithm, we obtained

\[\boldsymbol{\widehat{\rho}} = \begin{pmatrix} 1 & -0.32 & 0 & 0 & 0 & 0 & 0 & 0 & 0 & 0 \\ -0.32 & 1 & -0.28 & 0 & 0 & 0 & 0 & 0 & 0 & 0 \\ 0 & -0.28 & 1 & -0.38 & 0 & -0.03 & 0 & 0 & 0 & 0 \\ 0 & 0 & -0.38 & 1 & -0.38 & 0 & -0.02 & 0 & 0 & 0 \\ 0 & 0 & 0 & -0.38 & 1 & -0.27 & -0.05 & 0 & 0 & 0 \\ 0 & 0 & -0.03 & 0 & -0.27 & 1 & -0.37 & 0 & 0 & 0 \\ 0 & 0 & 0 & -0.02 & -0.05 & -0.37 & 1 & 0 & 0 & -0.10 \\ 0 & 0 & 0 & 0 & 0 & 0 & 0 & 1 & 0 & 0 \\ 0 & 0 & 0 & 0 & 0 & 0 & 0 & 0 & 1 & 0 \\ 0 & 0 & 0 & 0 & 0 & 0 & -0.10 & 0 & 0 & 1 \end{pmatrix},\] 
corresponding to a relative $L_1$ error of $ 64\%$.


\begin{thebibliography}{99}
	
	%% For authoryear reference style
	%% \bibitem[Author(year)]{label}
	%% Text of bibliographic item
	
	\bibitem{abadir2005}
	Abadir, K.~M.~ and Magnus, J.~R.~ (2005).
	\textit{Matrix Algebra}.
	Cambridge University Press.
	
%	\bibitem{bauer2016}
%	Bauer, A. and Czado, C. (2016).
%	Pair-Copula Bayesian Networks.
%	\textit{Journal of Computational and Graphical Statistics}, 25(4), 1248--1271.
	
	%\bibitem[Bouezmarni et al.(2012)]{bouezmarni2012}Bouezmarni, T., Rombouts, J.V.K. and Taamouti, A. (2012). Nonparametric copula-based test for conditional independence with applications to Granger causality.\textit{Journal of Business \& Economic Statistics}, 30(2), 275--287.
	
	\bibitem{boyd2011}
	Boyd, S., Parikh, N., Chu, E., Peleato, B. and Eckstein, J. (2011).
	Distributed optimization and statistical learning via the alternating
	direction method of multipliers.
	\textit{Foundations and Trends in Machine Learning}, 3(1), 1--122.
	doi:10.1561/2200000016.
	
\bibitem{brent1973} Brent, R. P. (1973). \textit{Algorithms for Minimization without Derivatives}. Englewood Cliffs, NJ: Prentice-Hall.
	
	\bibitem{danaher2014}
	Danaher, P., Wang, P. and Witten, D. M. (2014).
	The joint graphical lasso for inverse covariance estimation across
	multiple classes.
	\textit{Journal of the Royal Statistical Society: Series B
		(Statistical Methodology)}, 76(2), 373--397.
	doi:10.1111/rssb.12033.
	
	\bibitem{dobra2011}
	Dobra, A. and Lenkoski, A. (2011).
	Copula Gaussian graphical models and their application to modeling functional disability data.
	\textit{The Annals of Applied Statistics}, 5(2A), 969--993.
	
%	\bibitem{elidan2010}
%	Elidan, G. (2010).
%	Copula Bayesian networks.
%	\textit{Advances in Neural Information Processing Systems}, 23.
%	
%	\bibitem{elidan2013}
%	Elidan, G. (2013).
%	Copulas in machine learning.
%	In \textit{Copulae in Mathematical and Quantitative Finance: Proceedings of the Workshop Held in Cracow, 10--11 July 2012}, pp.~39--60.
%	Springer.
	
	\bibitem{finegold2014}
	Finegold, M. A. and Drton, M. (2014).
	Robust graphical modeling with $t$-distributions.
	\textit{arXiv preprint arXiv:1408.2033}.
	
%	\bibitem{Frahm2005} Frahm, G., Junker, M., and Schmidt, R. (2005). Estimating the Tail-Dependence Coefficient: Properties and Pitfalls. \textit{Insurance: Mathematics and Economics}, \textbf{37}(1), 80--100.
%	
	\bibitem{Friedman2008} Friedman, J., Hastie, T., and Tibshirani, R. (2008). Sparse Inverse Covariance Estimation with the Graphical Lasso.\textit{Biostatistics}, \textbf{9}(3), 432--441. \url{https://doi.org/10.1093/biostatistics/kxm045}
	
	\bibitem{Hermes2024} Hermes, S., van Heerwaarden, J., and Behrouzi, P. (2024). ``Copula Graphical Models for Heterogeneous Mixed Data.'' \textit{Journal of Computational and Graphical Statistics}, \textbf{33}(3), 991--1005. \url{https://doi.org/10.1080/10618600.2023.2289545}
	 
	
	%\bibitem[Farnoudkia and Purut\c{c}uo\u{g}lu(2021)]{farnoudkia2021}
	%Farnoudkia, H. and Purut\c{c}uo\u{g}lu, V. (2021).
	%Vine copula graphical models in the construction of biological networks.
	%\textit{Hacettepe Journal of Mathematics and Statistics}, 50(4), 1172--1184.
	
%	\bibitem{lasserre2022}
%	Lasserre, M. (2022).
%	\textit{Apprentissages dans les réseaux bayésiens à base de copules non-paramétriques}.
%	PhD thesis, Sorbonne Université.
%	
%	\bibitem{lauritzen1989}
%	Lauritzen, S.L. and Wermuth, N. (1989).
%	Graphical models for associations between variables, some of which are qualitative and some quantitative.
%	\textit{The Annals of Statistics}, 17(1), 31--57.
	
	\bibitem{liu2009}
	Liu, H., Lafferty, J. and Wasserman, L. (2009).
	The nonparanormal: Semiparametric estimation of high-dimensional undirected graphs.
	\textit{Journal of Machine Learning Research}, 10, 2295--2328.
	
	\bibitem{liu2012}
	Liu, H., Han, F., Yuan, M., Lafferty, J. and Wasserman, L. (2012).
	High-dimensional semiparametric Gaussian copula graphical models.
	\textit{The Annals of Statistics}, 40(4), 2293--2326.
	
	\bibitem{liuRubin1995}
	Liu, C. and Rubin, D. B. (1995).
	ML estimation of the $t$ distribution using EM and its extensions, ECM and ECME.
	\textit{Statistica Sinica}, 5(1), 19--39.
	
	\bibitem{meng1993}
	Meng, X.-L. and Rubin, D. B. (1993).
	Maximum likelihood estimation via the ECM algorithm: A general framework.
	\textit{Biometrika}, 80(2), 267--278.
	
	
	\bibitem{Nelsen2006} Nelsen, R. B. (2006). \textit{An Introduction to Copulas}. New York, NY: Springer.
	
	%\bibitem[Pircalabelu et al.(2015)]{pircalabelu2015}
	%Pircalabelu, E., Claeskens, G. and Waldorp, L.J. (2015).
	%A focused information criterion for graphical models.
	%\textit{Statistics and Computing}, 25(6), 1071--1092.
	
	%\bibitem[Sancetta and Satchell(2004)]{sancetta2004}	Sancetta, A. and Satchell, S. (2004).The Bernstein copula and its applications to modeling and approximations of multivariate distributions.	\textit{Econometric Theory}, 20(3), 535--562.
	
	\bibitem{shi2016}
	Shi, H.-J. M., Tu, S., Xu, Y. and Yin, W. (2016).
	A primer on coordinate descent algorithms.
	\textit{arXiv preprint arXiv:1610.00040}.
	
	\bibitem{wright2015}
	Wright, S. J. (2015).
	Coordinate descent algorithms.
	\textit{Mathematical Programming}, 151, 3--34.
	doi:10.1007/s10107-015-0892-3.
\end{thebibliography}
\end{document}